\documentclass{spie}

\usepackage[utf8]{inputenc}

\usepackage[table,xcdraw]{xcolor}
\usepackage{times} % Times New Roman
\usepackage{indentfirst} % Indent first paragraphs
\usepackage{multirow}
\usepackage{graphicx}
\usepackage{xkeyval}
\usepackage{adjustbox}
\usepackage{booktabs, multirow} 
\usepackage{soul}
\usepackage[table]{xcolor} % for cell colors
\usepackage{changepage,threeparttable} % for wide tables
\usepackage{amsmath,graphicx, amsfonts}
\usepackage{tabularx}
\usepackage{booktabs, multirow} % for borders and merged ranges
\usepackage{soul}% for underlines
\usepackage[table]{xcolor} % for cell colors
\usepackage{changepage,threeparttable}
\usepackage{algorithm}
\usepackage{algorithmic}
\usepackage{amsmath}
\usepackage{hyperref}

\title{A High-Accuracy Fast Hough Transform with Linear–Log-Cubed Computational Complexity for Arbitrary-Shaped Images}

\author{Danil Kazimirov\supit{1,2,3},
Dmitry Nikolaev\supit{1,4}
%Arseniy Terekhin\supit{2,4}
\skiplinehalf
  \normalsize
\supit{1} Smart Engines Service LLC, 117312 Moscow, Russia; \\
\supit{2} Institute for Information Transmission Problems (Kharkevich Institute) RAS, 127051 Moscow, Russia; \\
\supit{3} Faculty of Mechanics and Mathematics, Lomonosov Moscow State University, 119991 Moscow, Russia; \\
\supit{4} Federal Research Center Computer Science and Control RAS, 119333 Moscow, Russia.
}

\usepackage{mathptmx}
\begin{document}

\maketitle

\begin{abstract}
The Hough transform (HT) is a fundamental tool across various domains, from classical image analysis to neural networks and tomography. 
Two key aspects of the algorithms for computing the HT are their computational complexity and accuracy -- the latter often defined as the error of approximation of continuous lines by discrete ones within the image region.
%The fast HT (FHT) algorithms with linearithmic complexity are well known; a classic example is the Brady–Yong algorithm, limited to images with power-of-two dimensions. 
%Its generalizations, such as $FHT2DT$, extend applicability to arbitrary image shapes while preserving optimal linearithmic complexity. 
%However, this efficiency comes at the cost of reduced accuracy, which worsens with image size and is significantly lower than that of accurate HT algorithms. 
%They offer constant-bounded approximation error at the expense of near-cubic computational complexity.
The fast HT (FHT) algorithms with optimal linearithmic complexity -- such as the Brady–Yong algorithm for power-of-two-sized images -- are well established. 
Generalizations like $FHT2DT$ extend this efficiency to arbitrary image sizes, but with reduced accuracy that worsens with scale. 
Conversely, accurate HT algorithms achieve constant-bounded error but require near-cubic computational cost.
This paper introduces $FHT2SP$ algorithm -- a fast and highly accurate HT algorithm.
It builds on our development of Brady’s superpixel concept, extending it to arbitrary shapes beyond the original power-of-two square constraint, and integrates it into the $FHT2DT$ algorithm.
%based on the $FHT2DT$ algorithm and enhanced using the concept of superpixels. 
With an appropriate choice of the superpixel's size, for an image of shape $w \times h$, the $FHT2SP$ algorithm achieves near-optimal computational complexity $\mathcal{O}(wh \ln^3 w)$, while keeping the approximation error bounded by a constant independent of image size, and controllable via a meta-parameter. 
We provide theoretical and experimental analyses of the algorithm’s complexity and accuracy.

\keywords{Hough transform; fast Hough transform; fast discrete Radon transform; Brady-Yong algorithm; dyadic patterns; approximation error; orthotropic error}

\end{abstract}

\section{Introduction}
\label{sec:introduction}

The Hough transform (HT), also known as the discrete Radon transform (DRT), is a widely used method in image processing for robust detection of straight lines, based on accumulating pixel values along discrete approximations of continuous lines. The core idea of the HT is the greater the accumulated value along such a line, the more confidently one can infer the presence of the corresponding continuous line in the image.
Originally proposed by Paul Hough in 1959 for identifying particle tracks in bubble chambers~\cite{hough1959machine}, the HT was first applied to localize line segments in images~\cite{rahmdel2015review}. 
Over time, its applications have expanded significantly, now including image binarization~\cite{aliev2014postroenie}, segmentation~\cite{saha2010hough},  text image normalization~\cite{bezmaternykh2024text}, computed tomography~\cite{polevoy2023tomographic}, and even integration into neural network architectures~\cite{sheshkus2020vanishing}.
Due to their importance for computer vision, results related to the HT, including purely theoretical contributions, are periodically presented at the International Conference on Machine Vision (ICMV)~\cite{ershov2015fast, aliev2019use}.

In practical applications, efficient computation of the HT requires both low computational complexity -- defined as the number of arithmetic operations performed, typically correlating with execution time -- and high accuracy, commonly understood as the approximation error of the Radon transform, or the approximation error (for example, maximum orthotropic error) of the continuous lines by discrete ones along which the input image is integrated. 
Regarding computational complexity, several fast HT (FHT) algorithms with linearithmic computational complexity $\mathcal{O}(wh \ln w)$ (as a function of size $wh$ of input image with shape $w \times h$) have been developed, ranging from the classical Brady–Yong algorithm~\cite{brady1992fast} -- applicable only to images with power-of-two dimensions -- to its generalizations for arbitrary image sizes. 
It has been shown that linearithmic complexity is optimal within the class of HT algorithms generalizing the Brady–Yong approach~\cite{kazimirov2025generalizing}, i.e., coinciding with the Brady–Yong algorithm on images with power-of-two dimensions~\cite{khanipov2018computational}. 
The main drawback of such FHT algorithms is their relatively low accuracy, which degrades with increasing image size. 
For instance, the Brady–Yong algorithm approximates continuous lines using so-called dyadic patterns~\cite{brady1992fast}, and for an image of width $w = 2^q$, $q \in \mathbb{N}$, the maximum orthotropic line approximation error is $\frac{\log_2 w}{6}$ for even $q$, and $\frac{\log_2 w}{6} - \frac{1}{18}$ for odd $q$~\cite{karpenko2021analysis, smirnov2023analyzing}.
On the other hand, accurate HT algorithms~\cite{math11153336, khanipov2018ensemble} provide an optimal, constant-bounded approximation error for any input image, but at the cost of significantly higher, often near-cubic, computational complexity.

In this work, we propose a novel $FHT2SP$ algorithm -- highly-accurate HT algorithm that is based on our refinement of Brady’s superpixel concept, generalized to support arbitrary shapes instead of being limited to power-of-two squares, and incorporated into the $FHT2DT$ algorithm. 
The $FHT2SP$ algorithm combines an approximation error bounded by a constant $\lambda + \frac{1}{2}$, independent of input image size and controlled by an algorithm's meta-parameter $\lambda \in (0, 1]$, with near-optimal linear-log-cubed computational complexity $\mathcal{O}(wh \ln^3 w)$ when superpixel's size is appropriately chosen.
%The provided theoretical properties of the $FHT2SP$ algorithm are consistent with the outcomes of our experimental evaluation.
The provided theoretical properties of the $FHT2SP$ algorithm align with experimental results.

\section{Related work}
\label{sec:related_works}

In addition to the Brady–Yong algorithm, the central and periodic DRT algorithms are applicable to images with power-of-two dimensions~\cite{app112210606}.
Both outperform the Brady-Yong algorithm in computational complexity while retaining $\mathcal{O}(wh \ln w)$ asymptotics, with $w$ and $h$ signifying image width and height. 
However, similar to the Brady-Yong algorithm, their approximation error increases with image size. 
For arbitrary-sized images, several fast generalizations of the Brady–Yong algorithm have been proposed—namely, $FHT2DS$, $FHT2DT$, $FHT2SS$, $FHT2ST$, $FHT2MS$, and $FHT2MT$~\cite{kazimirov2025generalizing}. 
All retain linearithmic complexity, though their accuracy also degrades with increasing image size. 
It was shown that the growth of orthotropic approximation error in these six algorithms is at most logarithmic. 
Among them, $FHT2DT$ was identified as best balancing low computational cost, high accuracy, and precision of the adjoint transform. 

In contrast, accurate HT algorithms such as $ASD2$~\cite{math11153336} and $KHM$~\cite{khanipov2018ensemble} leverage digital straight line segments (DSLS)~\cite{rosenfeld2006digital} to limit the maximum orthotropic approximation error to a constant $1/2$, achieving significantly better accuracy than $FHT2DT$’s $(7 + 2\log_2 w)/12$ bound. 
However, this comes at higher computational costs: $\mathcal{O}(n^{8/3})$ for $ASD2$ (with $n = w = h = 8^q$, $q \in \mathbb{Z}$, $q \geq 0$) and $\mathcal{O}(n^3 / \ln n)$ for $KHM$ (when processing an image of arbitrary shape $n \times n$). 

%A direction toward improving Brady–Yong's accuracy was explored by Brady~\cite[Section 3.4]{brady1998fast},
The Brady–Yong algorithm was originally developed jointly by Martin~L.~Brady and Whanki~Yong~\cite{brady1992fast}, and a subsequent refinement aimed at improving its accuracy was proposed by Martin~L.~Brady in a later work~\cite[Section 3.4]{brady1998fast}.
Brady proposed a superpixel-based method where each $1\times1$ pixel is replaced with a $k \times k$ block, $k=2^m$, $m \in \mathbb{Z}$, $m \geq 0$, or superpixel, which is an image with one column (or row) retaining the original pixel value, and others set to zero. 
The superpixel approach enables reduced maximum orthotropic error by a factor of $k$ while 
sharing the same asymptotic computational complexity as the original algorithm, when the superpixel's shape is fixed. 
We observe that Brady’s superpixel strategy can be generalized to other FHT algorithms, and the constraint of square superpixels with power-of-two linear size can be relaxed. 
In the next section, we integrate the extended superpixel structures into the $FHT2DT$ algorithm, resulting in the proposed $FHT2SP$ (SuperPixel-based) method, applicable to arbitrary-shaped image. 
Among the generalizations of the Brady–Yong for arbitrary image sizes, $FHT2DT$ was selected for enhancement as it best meets practical criteria~\cite{kazimirov2025generalizing}. 
%We study the computational complexity and accuracy of $FHT2SP$ both theoretically and experimentally, including the impact of arbitrarily shaped superpixels on its performance.

% Сказать про обобщения, которые применимы к изображениям произвольного размера,  FHT2DS, FHT2DT и пр. -- они имеют оптимльную скорость, но ошибка растет. 

% Среди точных алгоритмов надо выделить Knanipov, ASD2. Их ошибка ограничена 0.5; потому что они используют DSLS (работа Розенфельда).
% Шаг в сторону уменьшения ошибки предпринят Брейди.
% Его идею применяем и мы к FHT2DT. Он выбран, потому что так рекомендовалось в статье IEEE access

\section{Algorithm description}
\label{sec:algorithm_description}

%The $FHT2SP$ algorithm enhances the accuracy of $FHT2DT$ by leveraging the concept of superpixel, which, by definition, is itself an image that is assigned to a single pixel of the original image.
%The superpixel concept is employed in the Brady algorithm~\cite[Section 3.4]{brady1998fast}, which was originally developed exclusively for images with power-of-two dimensions and uses square superpixels with power-of-two dimensions by design.
We describe an $FHT2SP$ HT algorithm that computes integrals along a quarter-set of discretizations of non-decreasing lines with slopes in the interval $[0, 1]$, commonly referred to as predominantly horizontal lines~\cite{math11153336, kazimirov2025generalizing}.
The resulting integral values are stored in the output Hough image: the pixel at position $(t, s)$ contains the integral of the input image along the discrete line parameterized by $(t, s)$, which approximates a continuous line of the form $l(t,s)=l(t,s)(x)= s + \frac{t}{w-1}x$, in a coordinate system with the origin located at the center of the bottom-left image pixel, the $Ox$ and $Oy$ axes are aligned with the image width and height, respectively.

Consider an image $I = I_{w \times h}: \mathbb{Z}_w \times \mathbb{Z}_h \equiv \{0,1,\ldots w-1 \} \times \{0,1,\ldots h-1 \} \rightarrow \mathbb{A}$ of the width $w$ and height $h$, with pixel values contained in Abelian semigroup $(\mathbb{A}, +)$ with a neutral element. Within the $FHT2SP$ algorithm,  each pixel of the input image $I$ is replaced by a superpixel of some (not necessarily square or with power-of-two dimensions, as Brady utilized~\cite{brady1998fast}) shape $\widehat{w} \times \widehat{h}$. 
In the $FHT2SP$ superpixel with indices $(x, y) \in \mathbb{Z}_w \times \mathbb{Z}_h$, the column (with height $\widehat{h}$) corresponding to the index $ \widehat{n} \in \mathbb{Z}_{\widehat{w}}$ is filled with the value $I(x, y)$. 
The remaining pixels of the superpixel with indices $(x, y)$ are filled with zero values. 
After this filling, we obtain the image $\widehat{I} = \widehat{I}_{W \times H}$, where $W = \widehat{w} \cdot w$ and $H = \widehat{h} \cdot h$:
\begin{equation} \label{eq:I_hat}
\widehat{I}(k\widehat{w} + \widehat{n}, m\widehat{h} + y) = I(k, m) \quad \forall k \in \mathbb{Z}_w, m \in \mathbb{Z}_h, y \in \mathbb{Z}_{\widehat{h}},
\end{equation}
while the remaining pixels of $\widehat{I}$ are filled with the value 0.

Then, the $FHT2DT$ algorithm is applied to the image $\widehat{I}$ (Equation~\ref{eq:I_hat}), resulting in the Hough image $\widehat{J} = \widehat{J}_{W \times H}$ of the expanded image $\widehat{I}$.
Next, it is necessary to perform subsampling to obtain the final Hough image $J = J_{w \times h}$ of the input image $I$.
In order to describe the subsampling of $\widehat{J}$, we specify a correspondence between geometric lines within the image $\widehat{I}$ and geometric lines with $st$-parameters $(t, s) \in \mathbb{Z}_w \times \mathbb{Z}_h$ within the image $I$, that is, lines of the form $y = y(x) = \frac{t}{w-1}\left(x - \frac{1}{2}\right) + s + \frac{1}{2}$ in a coordinate system with axes along the sides of the image $I$, where the origin is in the bottom-left corner. 
This will allow us to map the line integrals over the original image $I$ to the corresponding line integrals over $\widehat{I}$.

When mapping a unit square $1 \times 1$ onto a rectangle $\widehat{w} \times \widehat{h}$, the line
%\begin{gather*}
$y = y(x) = \frac{t}{w-1}\left(x - \frac{1}{2}\right) + s + \frac{1}{2}$ % \equiv l(t, s)\left(x - \frac{1}{2} \right) + \frac{1}{2}, 
%\\
%l(x) = l(t,s)(x) = \frac{t}{w-1}x+s,
%\end{gather*} 
transforms into the line
%\begin{equation*}
$\widehat{y} = \widehat{y}(x) = \frac{\widehat{h}}{\widehat{w}} \frac{t}{w-1} \left(x - \frac{\widehat{w}}{2}\right) + \widehat{h} \left(s + \frac{1}{2}\right)$
%\end{equation*}
in the coordinate system with axes along the sides of $\widehat{I}$, where the origin is in the bottom-left corner. 
This line can also be expressed as
\begin{equation} \label{widehat_y_C}
\widehat{y}_{C} = \widehat{y}_{C}(x) = \frac{\widehat{h}}{\widehat{w}} \frac{t}{w-1} \left(x - \frac{\widehat{w}}{2} + \frac{1}{2}\right) + \widehat{h} \left(s + \frac{1}{2}\right) - \frac{1}{2}
\end{equation}
in the coordinate system with axes along the sides of the image $\widehat{I}$, where the origin is at the center of the bottom-left pixel. 
By rounding the coordinates of the endpoints of the line $\widehat{y}_{C}=\widehat{y}_{C}(t,s)$, we get a line, denoted as $\left[ \widehat{y}_{C} \right]$, which connects points with coordinates $\left(0, \left[ \widehat{y}_{L} \right] \right)$ and $\left(w \widehat{w} - 1, \left[ \widehat{y}_{R} \right] \right)$, where $\widehat{y}_L = \widehat{y}_{C}(0)$ and $\widehat{y}_R = \widehat{y}_{C}(w \widehat{w} - 1)$.
%\begin{gather*}
%\widehat{y}_L = \widehat{y}_{C}(0) = \frac{\widehat{h}}{\widehat{w}} \frac{t}{w-1} \left( \frac{1}{2} - \frac{\widehat{w}}{2} \right) + \widehat{h} \left( s + \frac{1}{2} \right) - \frac{1}{2}, \quad
%\widehat{y}_R = \widehat{y}_{C}(w \widehat{w} - 1) = \frac{\widehat{h}}{\widehat{w}} \frac{t}{w-1} \left(w \widehat{w} - \frac{\widehat{w}}{2} - \frac{1}{2}\right) + \widehat{h} \left(s + \frac{1}{2}\right) - \frac{1}{2}.
%\end{gather*}
The notation $[\cdot]$ indicates rounding to the nearest integer value.
Hence, if the lines are considered cyclically extended beyond the image boundaries, the line $\left[ \widehat{y}_{C} \right]$ possesses $(\widehat{t}, \widehat{s})$ coordinates in the $st$-parametrization:
\begin{gather} \label{eq:st_hat}
    \widehat{t} = \widehat{t}(t, s) = \left(  \left[ \widehat{y}_{R} \right] - \left[ \widehat{y}_{L} \right] \right) \mod w \widehat{w}, \quad
    \widehat{s} = \widehat{s}(t,s) = \left[ \widehat{y}_{L} \right] \mod h \widehat{h}, \quad \widehat{y}_L = \widehat{y}_{C}(0), \quad \widehat{y}_R = \widehat{y}_{C}(w \widehat{w} - 1),
\end{gather}
and, consequently, 
\begin{equation} \label{[widehat_y_C]}
    [\widehat{y}_C] = [\widehat{y}_C](x) = \widehat{s} + \frac{\widehat{t}}{w-1}x.
\end{equation}
% [\widehat{y}_C](\widehat{t}, \widehat{s}), but \widehat{y}_C(t, s)

The pixel value $J(t, s)$ is set equal to the value $\widehat{J} \left( \widehat{t}, \widehat{s} \right)$ accumulated along the $FHT2DT$ discrete line, which approximates line $[\widehat{y}_C]=[\widehat{y}_C](\widehat{t},\widehat{s})$ within the expanded image $\widehat{I}$,
%\begin{equation*}
%J(t, s) = \widehat{J} \left( \widehat{t}, \widehat{s} \right),
%\end{equation*}
meaning that we replace the sum of pixel values of the image $\widehat{I}$ along the line $\widehat{y}_C=\widehat{y}_C(t,s)$ with non-integer endpoints with the sum of pixels along the line $[\widehat{y}_C]=[\widehat{y}_C](\widehat{t},\widehat{s})$ having endpoints with the nearest integer coordinates (to the line $\widehat{y}_C$).
%Thus, we obtain the following pseudocode Algorithm~\ref{alg:fht2sp} for the $FHT2SP$ algorithm. 
Thus, we obtain the pseudocode of the $FHT2SP$ algorithm, as presented in Algorithm~\ref{alg:fht2sp}, with its workflow illustrated in Figure~\ref{fig:algorithm_workflow}.
In the pseudocode, the function $Create\_Zeroed\_Image(w, h)$ creates an image of size $w \times h$ initialized with zeros.
For the pseudocode of the $FHT2DT$ algorithm, see papers~\cite{kazimirov2025generalizing, kazimirov2025generalization}.
Note that the $FHT2SP$ algorithm reduces to the $FHT2DT$ algorithm when the superpixel dimensions are set to unity, i.e., $\widehat{w} = \widehat{h} = 1$.

\begin{figure}[!ht]
\centering
\scalebox{0.885}{
\begin{minipage}[c]{0.6\textwidth}
\begin{algorithm}[H]
\caption{Algorithm $FHT2SP$}\label{alg:fht2sp}
    \begin{algorithmic}[1]
    \STATE{\textbf{Input:} image width $w > 0$, image height $h > 0$, image $I = I_{w \times h}$, superpixel width $\widehat{w}$, superpixel height $\widehat{h}$, superpixel non-zero column number $\widehat{n} \in \mathbb{Z}_{\widehat{w}}$}
    \STATE{\textbf{Output:} Hough image $J = J_{w \times h}$}
    \IF{$w > 1$}
        \STATE{$\widehat{I} \leftarrow Create\_Zeroed\_Image \left(w \widehat{w}, h \widehat{h} \right)$}
        \FOR{$k \leftarrow 0$ \textbf{to} $w - 1$}
            \FOR{$m \leftarrow 0$ \textbf{to} $h - 1$}
                \STATE{$\widehat{I}\left(k \widehat{w} + \widehat{n}, m\widehat{h}:(m+1) \widehat{h} \right) \leftarrow I(k, m) $}
            \ENDFOR
        \ENDFOR
        \STATE{$\widehat{J} \leftarrow FHT2DT\left(w \widehat{w}, h \widehat{h}, \widehat{I}\right)$}
        \FOR{$t \leftarrow 0$ \textbf{to} $w - 1$}
            \FOR{$s \leftarrow 0$ \textbf{to} $h - 1$}
                \STATE{$\widehat{y}_{L} \leftarrow \frac{\widehat{h}}{\widehat{w}} \frac{t}{w-1} \left( \frac{1}{2} - \frac{\widehat{w}}{2} \right) + \widehat{h} \left( s + \frac{1}{2} \right) - \frac{1}{2}$}
                \STATE{$\widehat{y}_{R} \leftarrow \frac{\widehat{h}}{\widehat{w}} \frac{t}{w-1} \left(w \widehat{w} - \frac{\widehat{w}}{2} - \frac{1}{2}\right) + \widehat{h} \left(s + \frac{1}{2}\right) - \frac{1}{2}$}
                \STATE{$\widehat{t} \leftarrow \left(  \left[ \widehat{y}_{R} \right] - \left[ \widehat{y}_{L} \right] \right) \mod w \widehat{w}$}
                \STATE{$\widehat{s} \leftarrow \left[ \widehat{y}_{L} \right] \mod h \widehat{h}$}
                \STATE{$J(t,s) \leftarrow \widehat{J}\left(\widehat{t}, \widehat{s}\right)$}
            \ENDFOR
        \ENDFOR
    \ELSE
        \STATE{$J \leftarrow I$}
%        \RETURN{}
    \ENDIF
    \end{algorithmic}
\end{algorithm}
\end{minipage}
}
\hfill
\scalebox{0.855}{
\begin{minipage}[c]{0.49\textwidth}
\centering
\includegraphics[width=\linewidth]{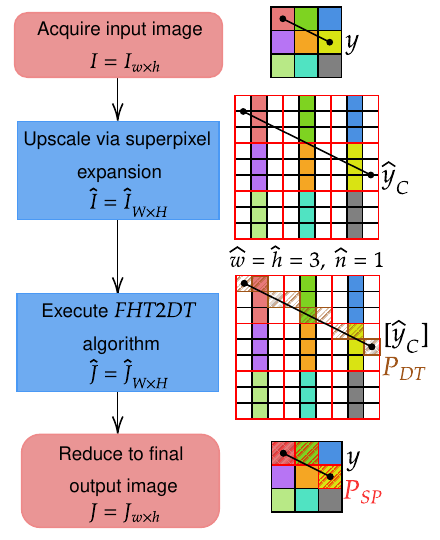}
\caption{Workflow of the $FHT2SP$ algorithm. An example of input image transformations is presented for the case $w=h=\widehat{w}=\widehat{h}=3$, $\widehat{n}=1$. Pixels with identical values are shown in the same color. Each superpixel is delineated with a red border, while the white pixels within superpixels correspond to zero columns.}
\label{fig:algorithm_workflow}
\end{minipage}
}
\end{figure}

\section{Theoretical analysis of complexity and accuracy}
\label{sec:complexity_and_accuracy_theoretical_analysis}

\subsection{Accuracy}~\label{sec:accuracy}
%Let us discuss the accuracy of the $FHT2SP$ algorithm. 
Any $FHT2DT$ discrete line in the region of the expanded image $\widehat{I}$ intersects a non-zero column of the superpixel at a single pixel $(\widehat{x}, \widehat{y})$. 
Since there is a one-to-one correspondence between the superpixels and the pixels of the original image, when adding the value $\widehat{I}(\widehat{x}, \widehat{y})$ to the sum $\widehat{J}\left( \widehat{t}, \widehat{s} \right)$, we essentially add $I\left(\big\lfloor \frac{\widehat{x}}{\widehat{w}} \big\rfloor, \big\lfloor \frac{\widehat{y}}{\widehat{h}}\big \rfloor \right)$. 
In other words, $J(t, s)$ is the sum along the $FHT2SP$ discrete line $P_{SP}(t, s) \in \mathcal{P}_{SP}(w, h, \widehat{w}, \widehat{h}, \widehat{n})$ of the form:
\begin{gather} \label{eq:P_SP}
    P_{SP}(t,s) = \Bigg\{ \left( x, P_{SP}(t, s)(x) = \Big\lfloor \frac{P_{DT}(\widehat{t}, \widehat{s})(x\widehat{w}+\widehat{n})}{\widehat{h}} \Big \rfloor  \right)\, \Big| \, x \in \mathbb{Z}_{w} \Bigg \} \in \mathcal{P}_{SP}(w, h, \widehat{w}, \widehat{h}, \widehat{n}),\\
    \label{eq:P_DT}
    P_{DT}(\widehat{t}, \widehat{s})=\Big\{ \left(\widehat{x}, P_{DT}(\widehat{t}, \widehat{s})(\widehat{x}) \right)\, | \, \widehat{x} \in \mathbb{Z}_{w \widehat{w}} \Big \} \in \mathcal{P}_{DT}(w\widehat{w}, h\widehat{h}), \; t \in \mathbb{Z}_w, s \in \mathbb{Z}_h,
\end{gather}
with parameter values $(\widehat{t}, \widehat{s})$ defined by Equations~\ref{eq:st_hat}.
Here, in the expression~\ref{eq:P_DT}, $\mathcal{P}_{DT}\left(w \widehat{w}, h \widehat{h} \right)$ denotes the set of discrete lines implicitly constructed by the $FHT2DT$ algorithm within the image region of shape $w \widehat{w} \times h \widehat{h}$~\cite{kazimirov2025generalizing}.
Hence, equivalently, we can express the $FHT2SP$ Hough image values as 
%\begin{equation*}
    $J(t, s) = \sum_{(x,y) \in P_{SP}(t,s)} I(x,y)$.
%\end{equation*}
The $FHT2DT$ discrete lines are presented by so-called patterns, which are discretely continuous discrete lines, i.e. with no jumps of magnitude bigger than 1~\cite{kazimirov2025generalizing, kazimirov2025generalization, math11153336}.
Therefore, the Equation~\ref{eq:P_SP} confirms that, for $\widehat{w} \leq \widehat{h}$, the $FHT2SP$ discrete lines are also patterns.

%The HT computed by the $FHT2SP$ algorithm integrates the input image along discrete lines defined by the $FHT2SP$ patterns $\mathcal{P}_{SP}(w, h, \widehat{w}, \widehat{h}, \widehat{n})$.
The accuracy of the patterns-based FHT is commonly measured as the maximum orthotropic approximation error of continuous lines by patterns~\cite{karpenko2021analysis, brady1998fast, kazimirov2025generalizing}.
Consequently, the accuracy, or approximation error, $\mathcal{E}_{SP}(w, h, \widehat{w}, \widehat{h}, \widehat{n})$ of the $FHT2SP$ algorithm can be calculated as given by equation
%\begin{equation*}
    $\mathcal{E}_{SP}(w, h, \widehat{w}, \widehat{h}, \widehat{n}) = \max_{(t, s) \in \mathbb{Z}_w \times \mathbb{Z}_h} \|l(t,s) - P_{SP}(t,s) \|$,
%\end{equation*}
where $l(t,s) = \Big\{ \left(x, l(t,s)(x) = s + \frac{t}{w-1}x \right) \big| x \in \mathbb{Z}_w \Big\}$, $P_{SP}(t, s) \in \mathcal{P}_{SP}(w, h, \widehat{w}, \widehat{h}, \widehat{n})$ and $\|f-g \| = \max_{x \in \mathbb{Z}_w}|f(x) - g(x)|$. 
We also recall here that $\mathcal{E}_{DT}(w, h) = \max_{(t, s) \in \mathbb{Z}_w \times \mathbb{Z}_h} \|l(t,s) - P_{DT}(t,s) \| \leq \frac{\log_2 w}{6} + \frac{7}{12}$~\cite{kazimirov2025generalizing}.

We establish the theorem concerning the accuracy of the $FHT2SP$ algorithm.
\begin{theorem} \label{thm:fht2sp_accuracy}
    Let $\widehat{w}$ and $\widehat{h}$ be odd integers, and let $\widehat{n} = \frac{\widehat{w}-1}{2}$. If the inequality $2 \log_2 (w \widehat{w}) + 13 < 12 \lambda \widehat{h}$, $0 < \lambda \leq 1$, is satisfied, then the orthotropic approximation error bound $\mathcal{E}_{SP}(w, h, \widehat{w}, \widehat{h}, \widehat{n}) < \lambda + \frac{1}{2}$ holds.
    %\begin{equation} \label{eq:condition_for_accuracy_bound}
    %$2 \log_2 (w \widehat{w}) + 13 < 12 \lambda \widehat{h}, \quad 0 < \lambda \leq 1$,
    %\end{equation}
    %then 
    %\begin{equation*}
    %$\mathcal{E}_{SP}(w, h, \widehat{w}, \widehat{h}, \widehat{n}) < \lambda + \frac{1}{2}$.
    %\end{equation*}
\end{theorem}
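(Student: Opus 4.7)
The plan is to control the pointwise approximation error $|l(t,s)(x) - P_{SP}(t,s)(x)|$ uniformly in $(t,s,x) \in \mathbb{Z}_w \times \mathbb{Z}_h \times \mathbb{Z}_w$ via a short chain of triangle inequalities, and then rearrange the resulting estimate into the hypothesis $2\log_2(w\widehat{w}) + 13 < 12\lambda\widehat{h}$. Using Equation~\ref{eq:P_SP}, I would first write $P_{SP}(t,s)(x) = \lfloor A/\widehat{h}\rfloor$, where $A := P_{DT}(\widehat{t},\widehat{s})(x_0)$ with $x_0 := x\widehat{w} + \widehat{n}$. Since $P_{DT}$ is integer-valued (Equation~\ref{eq:P_DT}), $A \in \mathbb{Z}$ and hence $r := A - \widehat{h}\lfloor A/\widehat{h}\rfloor \in \{0,1,\ldots,\widehat{h}-1\}$. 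A key simplification comes from the hypotheses $\widehat{w}$ odd and $\widehat{n} = (\widehat{w}-1)/2$: the affine shift in Equation~\ref{widehat_y_C} satisfies $x_0 - \widehat{w}/2 + 1/2 = x\widehat{w}$, which yields the clean identity $\widehat{y}_C(x_0) = \widehat{h}\,l(t,s)(x) + (\widehat{h}-1)/2$. The odd parity of $\widehat{h}$ makes $(\widehat{h}-1)/2$ an integer, which lets the floor step interact cleanly with the mod-$\widehat{h}$ reduction.

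Next, I would bound $|A - \widehat{y}_C(x_0)|$ by decomposing through the rounded-endpoint line $[\widehat{y}_C]$. The $FHT2DT$ accuracy recalled in the paper gives $|A - [\widehat{y}_C](x_0)| \leq \mathcal{E}_{DT}(w\widehat{w}, h\widehat{h})$, since $P_{DT}(\widehat{t},\widehat{s})$ approximates precisely the line with $st$-parameters $(\widehat{t},\widehat{s})$ of Equation~\ref{eq:st_hat}, which is $[\widehat{y}_C]$ by Equation~\ref{[widehat_y_C]}. The second link is elementary: $[\widehat{y}_C](x_0) - \widehat{y}_C(x_0)$ is the convex combination $(1-\alpha)([\widehat{y}_L]-\widehat{y}_L) + \alpha([\widehat{y}_R]-\widehat{y}_R)$ with $\alpha = x_0/(w\widehat{w}-1) \in [0,1]$, and each endpoint rounding error lies in $[-1/2, 1/2]$, so $|[\widehat{y}_C](x_0) - \widehat{y}_C(x_0)| \leq 1/2$. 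Chaining,
\[
A \;=\; \widehat{h}\, l(t,s)(x) + \tfrac{\widehat{h}-1}{2} + \eta, \qquad |\eta| \;\leq\; \mathcal{E}_{DT}(w\widehat{w}, h\widehat{h}) + \tfrac{1}{2}.
\]

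The final step is to push this estimate across the floor. Dividing by $\widehat{h}$ and using the integer decomposition $A = \widehat{h}\lfloor A/\widehat{h}\rfloor + r$, I obtain the exact identity
\[
\lfloor A/\widehat{h}\rfloor - l(t,s)(x) \;=\; \tfrac{1}{2} - \tfrac{1}{2\widehat{h}} + \tfrac{\eta - r}{\widehat{h}},
\]
and maximize the absolute value of the right-hand side over $\eta$ in its range and $r \in \{0,1,\ldots,\widehat{h}-1\}$. This produces a bound of the form $\tfrac{1}{2} + (c + \mathcal{E}_{DT}(w\widehat{w}, h\widehat{h}))/\widehat{h}$, where $c$ is a small constant arising from the interplay of the $\pm 1/2$ endpoint rounding with the integer quantization introduced by the floor. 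Substituting the recalled bound $\mathcal{E}_{DT}(w\widehat{w}, h\widehat{h}) \leq (2\log_2(w\widehat{w}) + 7)/12$ and requiring the estimate to be strictly less than $\lambda + 1/2$ then rearranges into the claimed hypothesis.

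The main obstacle I expect is precisely this last optimization: the exact value of $c$ depends delicately on whether one exploits the integer-valued nature of both $A$ and $r$ together with the $\leq 1/2$ endpoint-rounding inequality, and landing on the constant $13$ (rather than a looser $19$ from a fully continuous treatment of $r$, or a sharper $7$ from squeezing both integer constraints together) requires careful bookkeeping of which inequalities are strict. A secondary subtlety is to confirm that, for the parameter values considered, the modular reductions in Equation~\ref{eq:st_hat} do not cause $[\widehat{y}_C]$ to wrap around inside the expanded image, so that the $FHT2DT$ bound applies to $P_{DT}(\widehat{t},\widehat{s})$ without any further adjustment.
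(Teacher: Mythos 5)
Your proposal is correct in outline and shares its first half with the paper's proof: both pass from $P_{DT}(\widehat{t},\widehat{s})$ to $\widehat{y}_C$ through the rounded-endpoint line $[\widehat{y}_C]$ via the triangle inequality, collecting $\mathcal{E}_{DT}(w\widehat{w},h\widehat{h})+\tfrac{1}{2}$. Where you genuinely diverge is the second half. The paper argues geometrically: the hypothesis forces $\|P_{DT}(\widehat{t},\widehat{s})-\widehat{y}_C\|_{\widehat{I}}<\lambda\widehat{h}$, so the $FHT2DT$ pattern can only cross the central columns of superpixels adjacent to those traversed by $\widehat{y}_C$, and the extra $\tfrac{1}{2}$ is attributed, somewhat informally, to the case where the pattern hits the ``wrong'' superpixel's column; the final bound is $\|P_{DT}-\widehat{y}_C\|_{\widehat{I}}/\widehat{h}+\tfrac{1}{2}<\lambda+\tfrac{1}{2}$. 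You instead push the estimate through the floor exactly, and your identity $\lfloor A/\widehat{h}\rfloor-l(t,s)(x)=\tfrac{1}{2}-\tfrac{1}{2\widehat{h}}+\tfrac{\eta-r}{\widehat{h}}$ is correct (as is the centering identity $\widehat{y}_C(x\widehat{w}+\widehat{n})=\widehat{h}\,l(t,s)(x)+\tfrac{\widehat{h}-1}{2}$, which is exactly where the hypotheses $\widehat{w}$ odd and $\widehat{n}=\tfrac{\widehat{w}-1}{2}$ enter). Your worry about the constant resolves in your favor: maximizing over $|\eta|\leq \mathcal{E}_{DT}+\tfrac{1}{2}$ and integer $r\in\{0,\dots,\widehat{h}-1\}$ gives the symmetric bound $\tfrac{1}{2}-\tfrac{1}{2\widehat{h}}+\tfrac{1}{\widehat{h}}\left(\mathcal{E}_{DT}+\tfrac{1}{2}\right)=\tfrac{1}{2}+\mathcal{E}_{DT}/\widehat{h}$, i.e.\ $c=0$ and the sufficient condition $2\log_2(w\widehat{w})+7<12\lambda\widehat{h}$, strictly weaker than the stated hypothesis with $13$ (which corresponds to $c=\tfrac{1}{2}$, the bound the paper effectively uses). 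So your route not only closes the argument under the given hypothesis but makes the origin of the ``$+\tfrac{1}{2}$'' fully transparent and shows the theorem's constant could be tightened. The one point you flag but do not resolve --- that the modular reductions in Equation~\ref{eq:st_hat} must not cause $[\widehat{y}_C]$ to wrap inside $\widehat{I}$ before the $FHT2DT$ error bound is invoked --- is equally unaddressed in the paper's own proof, so it is a shared, not a new, gap.
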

\begin{proof}
Fix a pair of parameters $(t, s) \in \mathbb{Z}_{w} \times \mathbb{Z}_h$ and consider lines $\widehat{y}_C=\widehat{y}_C(t,s)$ and $[\widehat{y}_C]=[\widehat{y}_C]\left(\widehat{t},\widehat{s}\right)$ determined by Equations~\ref{widehat_y_C} and \ref{[widehat_y_C]} in the coordinate system with axes along the sides of the image $\widehat{I}$, where the origin is at the center of the bottom-left pixel. 
As $[\widehat{y}_C]$ is obtained from $\widehat{y}_C$ by rounding the endpoints coordinates, one may write $\|\widehat{y}_C - [\widehat{y}_C] \|_{\widehat{I}} \leq \frac{1}{2}$; index $\widehat{I}$ means that the norm is calculated over the geometric region of the image $\widehat{I}$. 

The $FHT2SP$ pattern $P_{SP}(t,s) \in \mathcal{P}_{SP}(w,h,\widehat{w},\widehat{h}, \widehat{n})$ is constructed by subsampling the $FHT2DT$ pattern $P_{DT}\left(\widehat{t}, \widehat{s} \right)$.
By virtue of the triangle inequality for the norm $\| \cdot \|_{\widehat{I}}$:
\begin{gather} 
    \|P_{DT}\left(\widehat{t},\widehat{s}\right) - \widehat{y}_C(t,s) \|_{\widehat{I}} \leq \|P_{DT}\left(\widehat{t},\widehat{s}\right) - [\widehat{y}_C](\widehat{t},\widehat{s}) \|_{\widehat{I}} + \|[\widehat{y}_C](\widehat{t},\widehat{s}) - \widehat{y}_C(t,s) \|_{\widehat{I}} \leq \label{eq:estimate_sp1} \\
    \|P_{DT}\left(\widehat{t},\widehat{s}\right) - [\widehat{y}_C](\widehat{t},\widehat{s}) \|_{\widehat{I}} + \frac{1}{2} \leq \mathcal{E}_{DT}(w\widehat{w},h\widehat{h}) + \frac{1}{2} \leq \frac{\log_2 (w\widehat{w})}{6} + \frac{13}{12}. \label{eq:estimate_sp2}
\end{gather}
Here, the bound $\|P_{DT}\left(\widehat{t},\widehat{s}\right) - [\widehat{y}_C](\widehat{t},\widehat{s}) \|_{\widehat{I}} \leq \mathcal{E}_{DT}(w\widehat{w},h\widehat{h}) \leq \frac{\log_2 (w \widehat{w})}{6} + \frac{7}{12}$ is explained by the fact that $P_{DT}\left(\widehat{t},\widehat{s}\right)$ is a $FHT2DT$ pattern approximating the line $[\widehat{y}_C](\widehat{t},\widehat{s})$ in the expanded image $\widehat{I}$.
On the basis of the chain of inequalities~\ref{eq:estimate_sp1} and \ref{eq:estimate_sp2}, the inequality
%\begin{equation*}
    $\frac{\log_2 (w \widehat{w})}{6} + \frac{13}{12} < \lambda \widehat{h} \iff 2 \log_2 (w \widehat{w}) + 13 < 12 \lambda \widehat{h}$
%\end{equation*}
guarantees that the deviation of the pattern $P_{DT}(\widehat{t}, \widehat{s})$ from the line $\widehat{y}_C(t,s)$ is less than $\lambda \widehat{h}$.
As soon as $0 < \lambda \leq 1$, the latter means that the pattern $P_{DT}(\widehat{t}, \widehat{s})$ can intersect central columns (corresponding to index $\widehat{n}$) of only those superpixels which are closest to the superpixels traversed by the line $\widehat{y}_C(t,s)$. 

The bound $\|P_{DT}\left(\widehat{t},\widehat{s}\right) - \widehat{y}_C(t,s) \|_{\widehat{I}} < \lambda \widehat{h}$ implies, taking into account that when mapping the line $l(t, s)$, such that $l(t,s)(x) = \frac{t}{w-1}x+s$, to the line $\widehat{y}_C(t, s)$ the orthotropic approximation error is scaled by $\widehat{h}$, that 
\begin{equation} \label{eq:approx_error}
    \|P_{SP}(t,s) - l(t,s) \|_I \equiv \|P_{SP}(t,s) - y(t,s) \|_I \leq \frac{\|P_{DT}\left(\widehat{t},\widehat{s}\right) - \widehat{y}_C(t,s) \|_{\widehat{I}}}{\widehat{h}}  + \frac{1}{2} < \lambda + \frac{1}{2},
\end{equation}
with $\frac{1}{2}$ appeared in the situation when the pattern $P_{DT}(\widehat{t}, \widehat{s})$ intersects the central column of the superpixel different from that which surrounds the line $\widehat{y}_C(t,s)$. 
The bound in Equation~\ref{eq:approx_error} directly infers, by taking supremum over $t$ ans $s$, that $\mathcal{E}_{SP}(w,h,\widehat{w}, \widehat{h}, \widehat{n}) < \lambda + \frac{1}{2}$.
The Theorem is proved.
\end{proof}

\subsection{Computational complexity}~\label{sec:computational_complexity}
The computational complexity of the HT algorithm is defined as the number of arithmetic operations performed during its execution. 
This complexity accounts only for operations directly involved in the evaluation of the Hough image, while auxiliary operations -- such as those related to index calculations -- are excluded~\cite{kazimirov2025generalizing, math11153336}.
The bound of the $FHT2DT$ computational complexity $T_{DT}(w,h)$, when processing an image of shape $w \times h$, is known~\cite{kazimirov2025generalizing, kazimirov2025generalization}:
\begin{equation} \label{eq:complexity_dt_upper}
T_{DT}(w, h) \leq \frac{81 \log_{17} 2}{17} w h  \log_2 w,
\end{equation}
with multiplicative constant $\frac{81 \log_{17} 2}{17}$ being sharp.
Utilizing the bound ~\ref{eq:complexity_dt_upper}, the computational complexity $T_{SP}(w, h, \widehat{w}, \widehat{h})=T_{SP}(w, h, \widehat{w}, \widehat{h}, \widehat{n})$ of the $FHT2SP$ algorithm, as a function of the input image shape $w \times h$ and the superpixel dimensions $\widehat{w} \times \widehat{h}$ (does not depend on index $\widehat{n}$), is estimated as follows:
\begin{theorem} \label{thm:comp_complexity}
    The computational complexity $T_{SP}(w, h, \widehat{w}, \widehat{h})$ of the $FHT2SP$ algorithm possesses the following estimation from above:
    \begin{equation} \label{eq:sp_complexity}
        T_{SP}(w, h, \widehat{w}, \widehat{h}) \leq \frac{81 \log_{17} 2}{17} w\widehat{w}h \widehat{h} \log_2 \left( w \widehat{w} \right).
    \end{equation}

    The estimate is sharp, i.e., the multiplicative constant $\frac{81 \log_{17} 2}{17}$ cannot be reduced:
    $
        \sup_{w, \widehat{w}, h, \widehat{h}} \frac{T_{SP}(w, h, \widehat{w}, \widehat{h})}{w\widehat{w}h \widehat{h} \log_2 \left( w \widehat{w} \right)} = \frac{81 \log_{17} 2}{17}.
    $
\end{theorem}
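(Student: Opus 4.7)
The plan is to identify $T_{SP}(w,h,\widehat{w},\widehat{h})$ with $T_{DT}(w\widehat{w}, h\widehat{h})$, and then invoke the sharp $FHT2DT$ bound~\ref{eq:complexity_dt_upper}. First I would walk through Algorithm~\ref{alg:fht2sp} and note that the only stage performing arithmetic on pixel values is the call $FHT2DT(w\widehat{w}, h\widehat{h}, \widehat{I})$ on line 10. The creation of the zeroed image, the copies on lines 5--9, and the subsampling on lines 11--17 (the computations of $\widehat{y}_L, \widehat{y}_R$, the rounding and mod evaluations producing $\widehat{t}, \widehat{s}$, and the assignment $J(t,s) \leftarrow \widehat{J}(\widehat{t},\widehat{s})$) are index bookkeeping and assignments rather than arithmetic on pixel values of the Hough image. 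Under the convention fixed at the start of Section~\ref{sec:computational_complexity}, they are therefore excluded from the operation count, so $T_{SP}(w,h,\widehat{w},\widehat{h}) = T_{DT}(w\widehat{w}, h\widehat{h})$. Substituting this identity into~\ref{eq:complexity_dt_upper} applied to the shape $w\widehat{w} \times h\widehat{h}$ immediately yields $T_{SP}(w,h,\widehat{w},\widehat{h}) \leq \frac{81 \log_{17} 2}{17} (w\widehat{w})(h\widehat{h}) \log_2(w\widehat{w})$, which is the claimed estimate~\ref{eq:sp_complexity}.

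For the sharpness, I would exploit the remark immediately following Algorithm~\ref{alg:fht2sp} that the $FHT2SP$ algorithm reduces to the $FHT2DT$ algorithm when $\widehat{w} = \widehat{h} = 1$. In particular $T_{SP}(w,h,1,1) = T_{DT}(w,h)$, so restricting the supremum in the theorem to the slice $\widehat{w}=\widehat{h}=1$ gives
$\sup_{w,\widehat{w},h,\widehat{h}} T_{SP}(w,h,\widehat{w},\widehat{h}) / \bigl(w\widehat{w} h \widehat{h} \log_2(w\widehat{w})\bigr) \geq \sup_{w,h} T_{DT}(w,h)/(wh\log_2 w) = \frac{81 \log_{17} 2}{17}$,
where the final equality is the sharpness of the constant for $FHT2DT$ asserted after~\ref{eq:complexity_dt_upper}. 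Combined with the upper bound already derived, the supremum is exactly $\frac{81 \log_{17} 2}{17}$, which is the desired sharpness.

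The main obstacle I anticipate is rhetorical rather than mathematical: making it airtight that no arithmetic on pixel values is hidden inside the superpixel-filling loop or the subsampling map, so that the equality $T_{SP}(w,h,\widehat{w},\widehat{h}) = T_{DT}(w\widehat{w}, h\widehat{h})$ holds on the nose rather than up to a lower-order overhead. Once that bookkeeping question is pinned down against the operation-counting convention of Section~\ref{sec:computational_complexity}, both halves of the theorem reduce to the already-established $FHT2DT$ complexity result and require no further work.
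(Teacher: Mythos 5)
Your proposal is correct and follows essentially the same route as the paper: the upper bound is obtained by applying the sharp $FHT2DT$ estimate~\ref{eq:complexity_dt_upper} to the expanded image of shape $w\widehat{w} \times h\widehat{h}$ (with the filling and subsampling steps excluded from the operation count by the stated convention), and sharpness is inherited from the $FHT2DT$ case by restricting to $\widehat{w} = \widehat{h} = 1$. Your write-up is in fact slightly more explicit than the paper's about why the auxiliary steps contribute no counted operations, but the argument is the same.
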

\begin{proof}
Since the $FHT2SP$ algorithm returns (after subsampling) the Hough image of the expanded image $\widehat{I}$ of size $w\widehat{w} \times h\widehat{h}$ computed via the $FHT2DT$ algorithm, we can apply the complexity estimate Equation~\ref{eq:complexity_dt_upper} for the $FHT2DT$ algorithm to an image $\widehat{I}$ of size $w\widehat{w} \times h\widehat{h}$. 
This proves Equation~\ref{eq:sp_complexity}.
%We obtain exactly:
%\begin{equation}
%    T_{SP}(w, h, \widehat{w}, \widehat{h}) \leq \frac{81 \log_{17} 2}{17} w\widehat{w}h \widehat{h} \log_2 \left( w \widehat{w} \right).
%\end{equation}
The multiplicative constant in the derived inequality is sharp (it cannot be made smaller), as it is sharp in the corresponding inequality for the $FHT2DT$ algorithm (i.e., when $\widehat{w} = \widehat{h} = 1$).
The theorem is proved.
\end{proof}

The next Theorem establishes the $FHT2SP$ computational complexity bound, provided the superpixel dimensions $\widehat{w}=\widehat{w}(w, \lambda)$ and $\widehat{h}=\widehat{h}(w, \lambda)$ are chosen equal positive odd integers and the smallest possible to satisfy the condition in Theorem~\ref{thm:fht2sp_accuracy} and, hence, ensure the maximum orthotropic approximation error bounded by $\lambda + \frac{1}{2}$, $\lambda \in (0, 1]$.

\begin{theorem} \label{thm:linear_log_cubed_complexity}
 Let the value of $\lambda \in (0, 1]$ be fixed, let $\widehat{x}=\widehat{x}(w, \lambda) = \widehat{w}(w, \lambda) = \widehat{h}(w, \lambda)$ be the smallest positive odd integer satisfying the inequality $2 \log_2 (wx) + 13 < 12 \lambda x$, and  $\widehat{n}(w, \lambda)=\frac{\widehat{w}(w, \lambda) - 1}{2}$. Then, the following bound for the computational complexity of the $FHT2SP$ algorithm holds:
 \begin{equation} \label{eq:linear_log_cubed_complexity}
     T_{SP}(w, h, \widehat{x}, \widehat{x}) = \mathcal{O} \left( w h \ln^3 w \right),
 \end{equation}
 i.e., there exists a constant $C=C(\lambda)>0$ such that $T_{SP}(w, h, \widehat{x}, \widehat{x}) \leq C w h \ln^3 w$ for $w \geq w_0 = \text{const} > 0$ and any dynamics of $h=h(w)$.
 \end{theorem}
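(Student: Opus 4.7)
The plan is to combine the complexity estimate of Theorem~\ref{thm:comp_complexity} with an asymptotic bound on $\widehat{x}(w,\lambda)$. Specifying $\widehat{w}=\widehat{h}=\widehat{x}$ in Equation~\ref{eq:sp_complexity} immediately yields
\begin{equation*}
T_{SP}(w,h,\widehat{x},\widehat{x}) \leq \frac{81 \log_{17} 2}{17}\, wh\, \widehat{x}^{\,2}\, \log_2(w\widehat{x}),
\end{equation*}
so everything reduces to showing that, for fixed $\lambda\in(0,1]$, the quantity $\widehat{x}=\widehat{x}(w,\lambda)$ grows at most like $\ln w$.

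The key step is therefore to analyze the defining inequality $2\log_2(wx)+13<12\lambda x$. I would pick a trial value of the form $x_0 = x_0(w,\lambda) = \bigl\lceil \tfrac{C}{\lambda}\log_2 w\bigr\rceil$ for a sufficiently large absolute constant $C$ (for instance $C=1$ already suffices for $w$ large enough). Substituting gives $12\lambda x_0 \geq 12C\log_2 w$ on the right-hand side, whereas on the left
\begin{equation*}
2\log_2(wx_0)+13 = 2\log_2 w + 2\log_2 x_0 + 13 = 2\log_2 w + O(\log\log w) + O(1).
\end{equation*}
Hence for $C$ fixed large enough and $w\geq w_0(\lambda)$, the inequality is satisfied at $x_0$. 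Increasing $x_0$ by at most $1$ to reach the next odd integer preserves the inequality (the right-hand side is increasing in $x$ faster than the left-hand side once we are past the threshold), so $\widehat{x}(w,\lambda) \leq x_0 + 1 = O(\log w /\lambda)$. Equivalently, there exists a constant $K=K(\lambda)>0$ such that $\widehat{x}(w,\lambda) \leq K \ln w$ for all $w \geq w_0$.

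Plugging this bound back yields $\widehat{x}^{\,2} = O(\ln^2 w)$ and $\log_2(w\widehat{x}) = \log_2 w + \log_2 \widehat{x} = O(\ln w)$, whence
\begin{equation*}
T_{SP}(w,h,\widehat{x},\widehat{x}) = O\bigl( wh \cdot \ln^2 w \cdot \ln w\bigr) = O(wh \ln^3 w),
\end{equation*}
uniformly in $h$, which is exactly Equation~\ref{eq:linear_log_cubed_complexity}. Note that the constant $C=C(\lambda)$ absorbs the factor $\tfrac{81\log_{17}2}{17}$ from Theorem~\ref{thm:comp_complexity}, the factor $K(\lambda)^2$ from the superpixel size bound, and the change of logarithm base. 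No nontrivial obstacle arises: the only delicate point is verifying that iterating through odd integers does not spoil the logarithmic growth of $\widehat{x}$, which is immediate since the gap between consecutive odd integers is bounded by $2$ while the slack in the inequality $12\lambda x - 2\log_2(wx) - 13$ is eventually increasing without bound in $x$.
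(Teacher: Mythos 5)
Your proof is correct, and its overall skeleton coincides with the paper's: bound the superpixel size $\widehat{x}(w,\lambda)$ by $O(\ln w)$, then substitute into the estimate of Theorem~\ref{thm:comp_complexity}. Where you genuinely diverge is in how the bound $\widehat{x}=O(\ln w)$ is obtained. The paper identifies $\widehat{x}$ (up to an additive constant $2$) with the exact solution $-W_{-1}\bigl(\tfrac{-3\lambda\ln 2}{32\sqrt{2}\,w}\bigr)/(6\lambda\ln 2)$ of the threshold equation $2\log_2(wx)+13=12\lambda x$ and then invokes cited two-sided bounds on the Lambert branch $W_{-1}$; you instead exhibit an explicit trial value $x_0=\lceil \tfrac{C}{\lambda}\log_2 w\rceil$, verify the defining inequality there by comparing $12C\log_2 w$ against $2\log_2 w + O(\log\log w)$, and use monotonicity of $12\lambda x - 2\log_2(wx)-13$ in $x$ (valid since its derivative $12\lambda - \tfrac{2}{x\ln 2}$ is positive for $x$ of order $\log_2 w/\lambda$) to conclude minimality gives $\widehat{x}\leq x_0+1$. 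Your route is more elementary and self-contained, needing no special functions or external inequalities; the paper's route buys a tighter, essentially exact asymptotic for $\widehat{x}$, which is what underlies its explicit final constant $\tfrac{486\log_{17}2}{17}\lambda\, wh\,\widehat{x}^{\,3}$ and the numerical values of $C(\lambda)$ reported in the experimental section. Your handling of the parity adjustment and of the uniformity in $h$ is sound, so no gap remains.
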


 \begin{proof}
Due to the definition of  $\widehat{x}$, the estimation
%\begin{equation} %\label{eq:x_hat_estimation}
$0 \leq \widehat{x} - \left( - \frac{1}{6 \lambda \ln 2 } W_{-1}\left(\frac{-3 \lambda \ln 2}{32 \sqrt{2} w} \right) \right) \leq 2$,
%\end{equation}
is valid for all sufficiently large $w$, where $-W_{-1}\left(\frac{-3 \lambda \ln 2}{32 \sqrt{2} w} \right) / (6 \lambda \ln 2)$ is a solution of the equation $2 \log_2 (wx) + 13 = 12 \lambda x$, and $W_{-1}(\cdot)$ signifies a real branch of the Lambert W function defined on the half-open interval $[-\frac{1}{e}, 0)$. 
Since $\frac{e}{e-1} \ln(-x) \leq W_{-1}(x) \leq \ln(-x) - \ln (- \ln (-x)) $ is applicable for $x \in [-1/e, 0)$~\cite{loczi2020explicit}, from the estimation for $\widehat{x}$, we derive $\widehat{x}= \mathcal{O} \left( \ln w  \right)$.
Therefore, the latter, in combination with Theorem~\ref{thm:comp_complexity}, implies
%\begin{equation*}
$T_{SP}(w, h, \widehat{x}, \widehat{x}) \leq \frac{81 \log_{17} 2}{17} wh \widehat{x} \,^2 \log_2 \left( w \widehat{x} \right) < \frac{486 \log_{17} 2}{17} \lambda wh \widehat{x}\,^3 = \mathcal{O}(wh \ln^3 w)$,
%\end{equation*}
which ends the proof of Equation~\ref{eq:linear_log_cubed_complexity}.
\end{proof}

\section{Experiments and discussion}
\label{sec:experiments}

We conducted an experiment aimed at analyzing the computational complexity and accuracy of the proposed $FHT2SP$ algorithm. 
A square superpixel $\widehat{x} \times \widehat{x}$ was used, with linear size $\widehat{x}=\widehat{x}(n, \lambda)$ defined according to the conditions of the Theorem~\ref{thm:linear_log_cubed_complexity} -- specifically, equal to the smallest positive odd integer satisfying the inequality $2 \log_2 (nx) + 13 < 12 \lambda x$, where $n$ is a linear size of the input grayscale 2D image $I_{n \times n}$, $n \in [2, 4096]$, filled with random integer values.
As theoretically proved, this choice guarantees that the maximal orthotropic approximation error $\mathcal{E}_{SP}(n)=\mathcal{E}_{SP}\left(n, n, \widehat{x}, \widehat{x}, \frac{\widehat{x} - 1}{2}\right)$ of continuous straight lines by discrete $FHT2SP$ patterns is bounded by $\lambda + \frac{1}{2}$.
The relationship between the superpixel size $\widehat{x}$, the meta-parameter $\lambda$, and the linear size of the input square image $n$ is shown in Figure~\ref{fig:superpixel_linear_size_and_normalized_complexity}(a).
The plots in Figure~\ref{fig:superpixel_linear_size_and_normalized_complexity}(b) illustrate the variation in normalized computational complexity $T_{SP}(n) / (n^2 \ln^3 n) \equiv T_{SP}\left(n, n, \widehat{x}, \widehat{x}, \frac{\widehat{x} - 1}{2}\right) / (n^2 \ln^3 n) $ of the $FHT2SP$ algorithm corresponding to the chosen meta-parameter $\lambda$ value.

\begin{figure}[!ht]
\centering

\centering
\begin{minipage}[b]{0.49\linewidth} 
\center{\includegraphics[height=50mm]{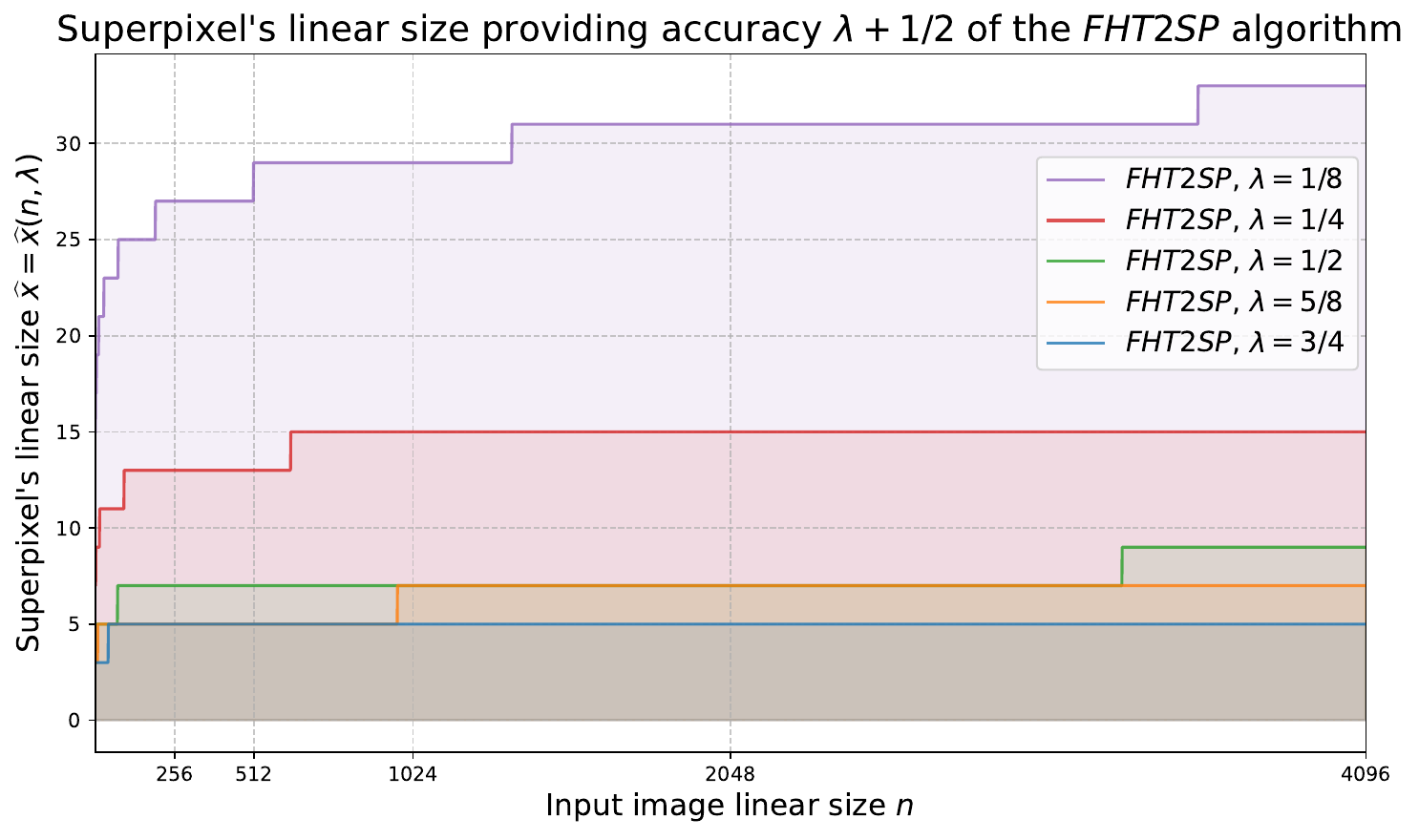}  \\
(a)}
\end{minipage}
\begin{minipage}[b]{0.49\linewidth}
\center{\includegraphics[height=50mm]{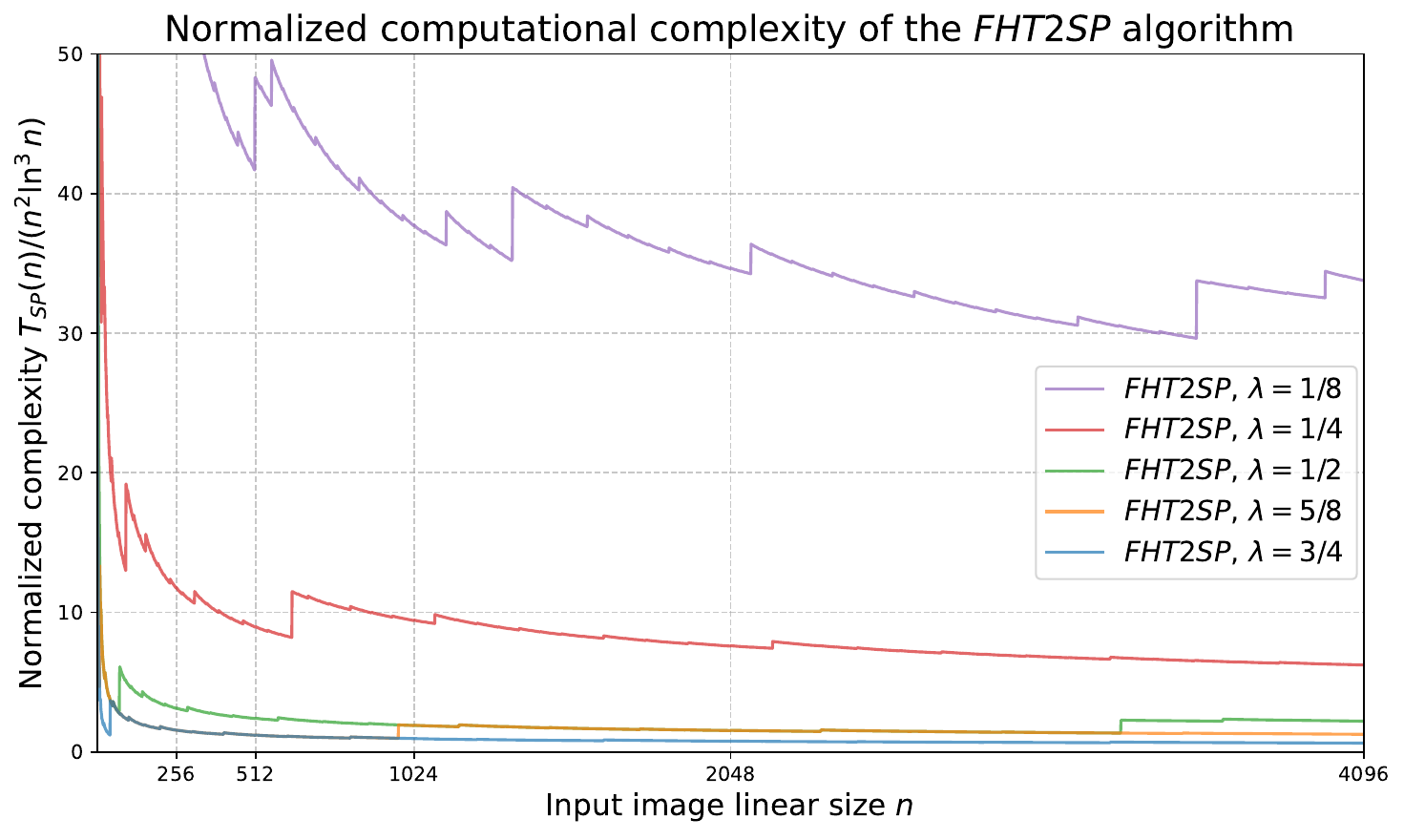} \\
(b) }
\end{minipage}

\caption{ (a) Dependence of the $FHT2SP$ superpixel’s linear size $\widehat{x}$ on the input image linear size $n$ and the meta-parameter $\lambda$ value; (b) Normalized computational complexity $T_{SP}(n) / (n^2 \ln^3 n)$ of the $FHT2SP$ algorithm across different meta-parameter $\lambda$ values.  
\label{fig:superpixel_linear_size_and_normalized_complexity}}
\end{figure}

As the input image size $n$ increases, a larger superpixel is required to achieve the desired approximation accuracy $\lambda + \frac{1}{2}$ (see~Figure~\ref{fig:superpixel_linear_size_and_normalized_complexity}(a)). 
Moreover, the superpixel's linear size $\widehat{x}$ is also inversely proportional to the value of $\lambda$ for any given $n$.
Notably, the discontinuities observed in the plots in Figure~\ref{fig:superpixel_linear_size_and_normalized_complexity}(b) correspond to abrupt changes in the linear size $\widehat{x}$ of the superpixel, which can be viewed in Figure~\ref{fig:superpixel_linear_size_and_normalized_complexity}(a). 
Figure~\ref{fig:superpixel_linear_size_and_normalized_complexity}(b) provides empirical support for Theorem~\ref{thm:linear_log_cubed_complexity} within the considered image size range.
The decreasing trend of the plots indicates a linear-polylogarithmic computational complexity.
Experimental data suggest that, for the given range of $n$, the constant $C(\lambda)$ hidden in the asymptotic complexity $\mathcal{O}(n^2 \ln^3 n)$ can be taken equal to $C(\lambda)=5002.64$, $567.53$, $270.26$, $72.07$, $72.07$ for $\lambda = 1/8, 1/4, 1/2, 5/8, 3/4$, respectively.

We carried out an experimental comparison of the computational accuracy $T_A=T_A(n)$ of various HT algorithms $A$, including the optimally fast $FHT2DT$ and the more accurate, but less computationally efficient, $ASD2$ and $KHM$ algorithms. 
As shown in Figure~\ref{fig:complexity_and_error_comparison}(a), the $FHT2SP$ algorithm with meta-parameter $\lambda = 3/4$, using the superpixel size $\hat{x}$ and nonzero column index $\hat{n} = \frac{\hat{x}-1}{2}$ defined above, outperforms $KHM$ in runtime starting from image linear size $2950$. 
And when $\lambda = 3/4$, the $FHT2SP$'s orthotropic approximation error, theoretically bounded by $5/4$, is comparable to the $1/2$ bound achieved by $ASD2$ and $KHM$. 
Thus, indeed, the $FHT2SP$ algorithm offers a compromise between near-optimal computational complexity and high accuracy (bounded approximation error).

\begin{figure}[!ht]
\centering

\centering
\begin{minipage}[b]{0.49\linewidth}
\center{\includegraphics[height=50mm]{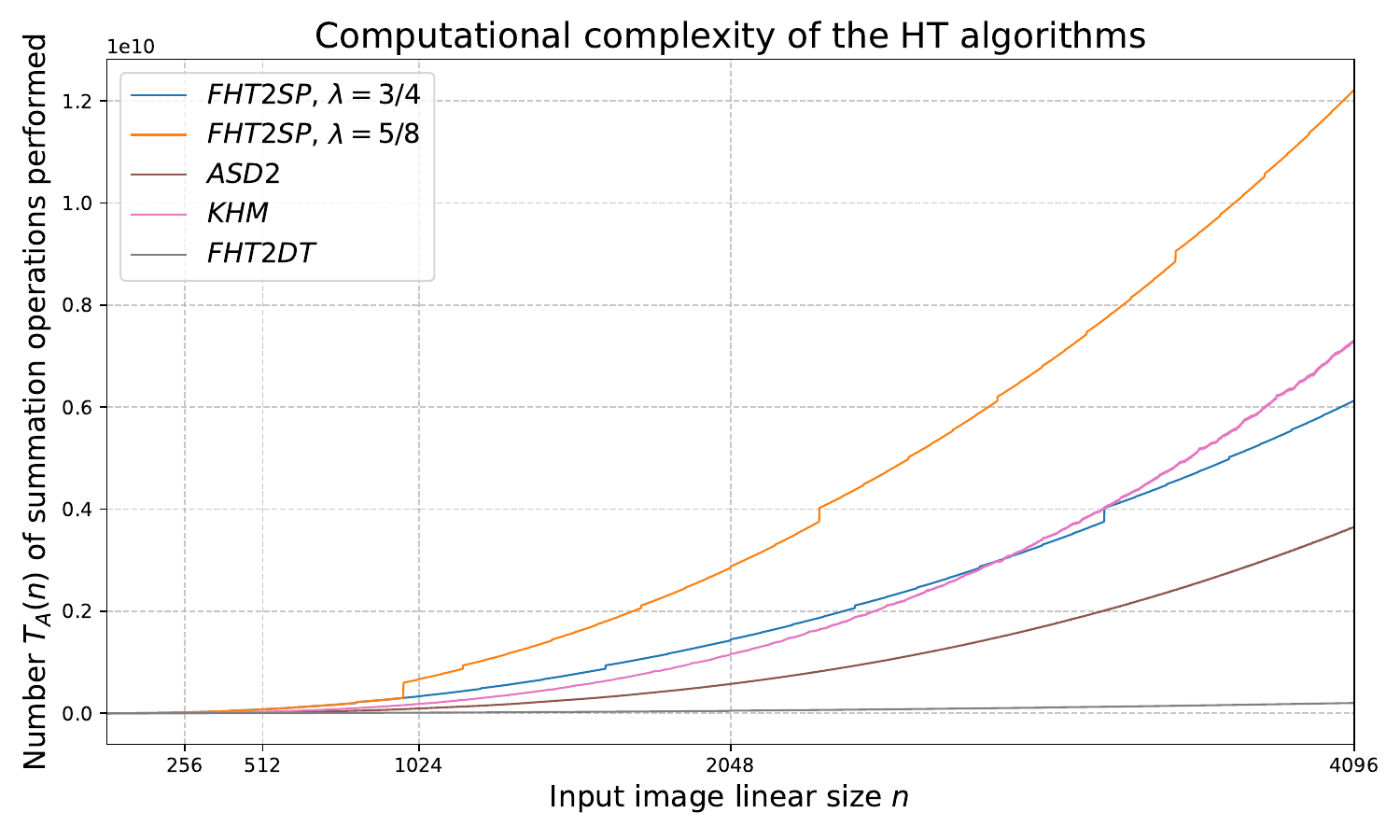} \\
(a)}
\end{minipage}
\begin{minipage}[b]{0.49\linewidth}
\center{\includegraphics[height=50mm]{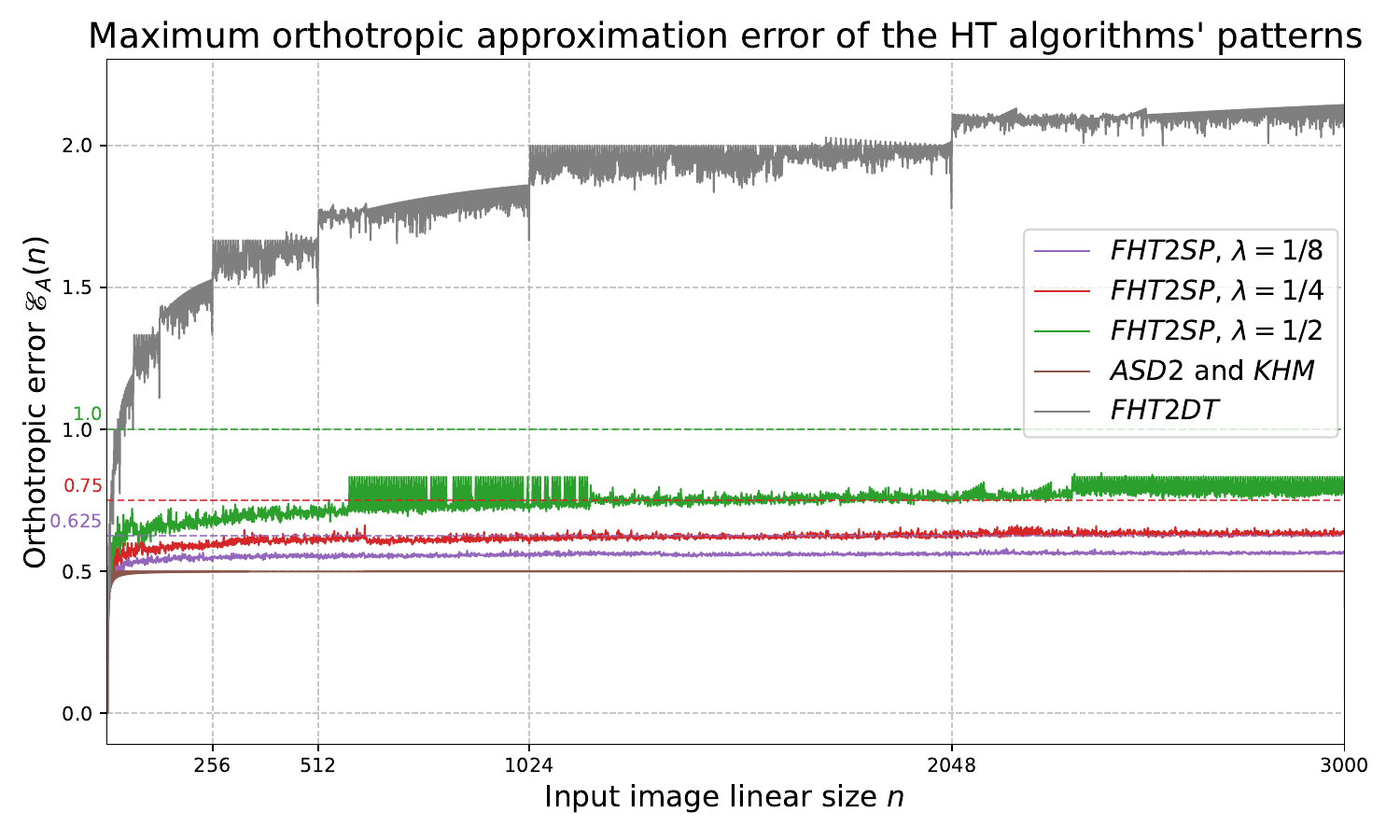} \\
(b) }
\end{minipage}

\caption{ (a) Absolute values of the computational complexity $T_A(n)$ of the compared HT algorithms $A$ for different input image linear sizes $n$; (b) Maximum orthotropic approximation error $\mathcal{E}_A(n)$ provided by the patterns of the compared HT algorithms $A$. 
\label{fig:complexity_and_error_comparison}}
\end{figure}

Within the tested range of $n$, $FHT2SP$ underperforms $ASD2$ in computational complexity for the given $\lambda$ values. 
The smaller the value of the meta-parameter $\lambda$, the greater the gap in computational complexity between $FHT2SP$ and the reference algorithms $ASD2$ and $KHM$ over small values of $n \leq 4096$. 
%within the observed range.
Nevertheless, for larger input image sizes, linear-log-cubed $FHT2SP$ is expected to be significantly faster than $ASD2$ and $KHM$, which both exhibit near-cubic computational complexity. 
Also, it is important to note that the experimentally measured maximum orthotropic approximation error $\mathcal{E}_A=\mathcal{E}_A(n)$ of the compared algorithms $A$, shown in Figure~\ref{fig:complexity_and_error_comparison}(b), confirms that the orthotropic error of the algorithm $FHT2SP$ remains bounded, with a bound independent of the input image size, as theoretically established in Theorem~\ref{thm:fht2sp_accuracy}.

Figure~\ref{fig:Shepp_Logan_sinograms} provides a visual comparison of the accuracy of the evaluated HT algorithms using a test image of the Shepp–Logan phantom~\cite{shepp1974fourier} with a resolution of $3000 \times 3000$ pixels. 
The output of the accurate $ASD2$ and $KHM$ algorithms -- both based on DSLS patterns with a guaranteed orthotropic approximation error bound of $1/2$ --  can be considered as the reference for comparison. 
When applying the fast but significantly less accurate $FHT2DT$ algorithm, noticeable high-frequency artifacts are present in the resulting HT image. 
Unlike $FHT2DT$, the results produced by the algorithm $FHT2SP$ ($\lambda = 1/2$ and $\lambda = 1$) are free of such artifacts and visually indistinguishable from the $ASD2$ and $KHM$ reference.

\begin{figure}[!ht]
\centering

\centering
\begin{minipage}[b]{0.32\linewidth}
\center{\includegraphics[height=58mm]{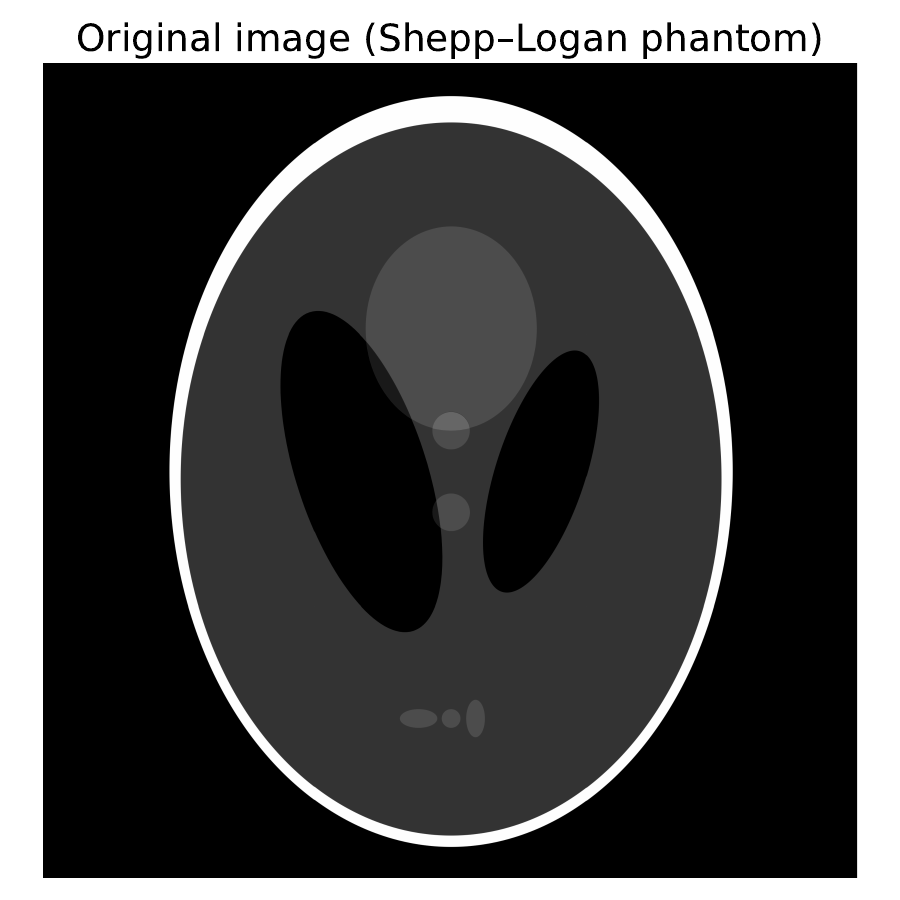} \\
(a)}
\end{minipage}
\begin{minipage}[b]{0.66\linewidth}
\center{\includegraphics[height=59mm]{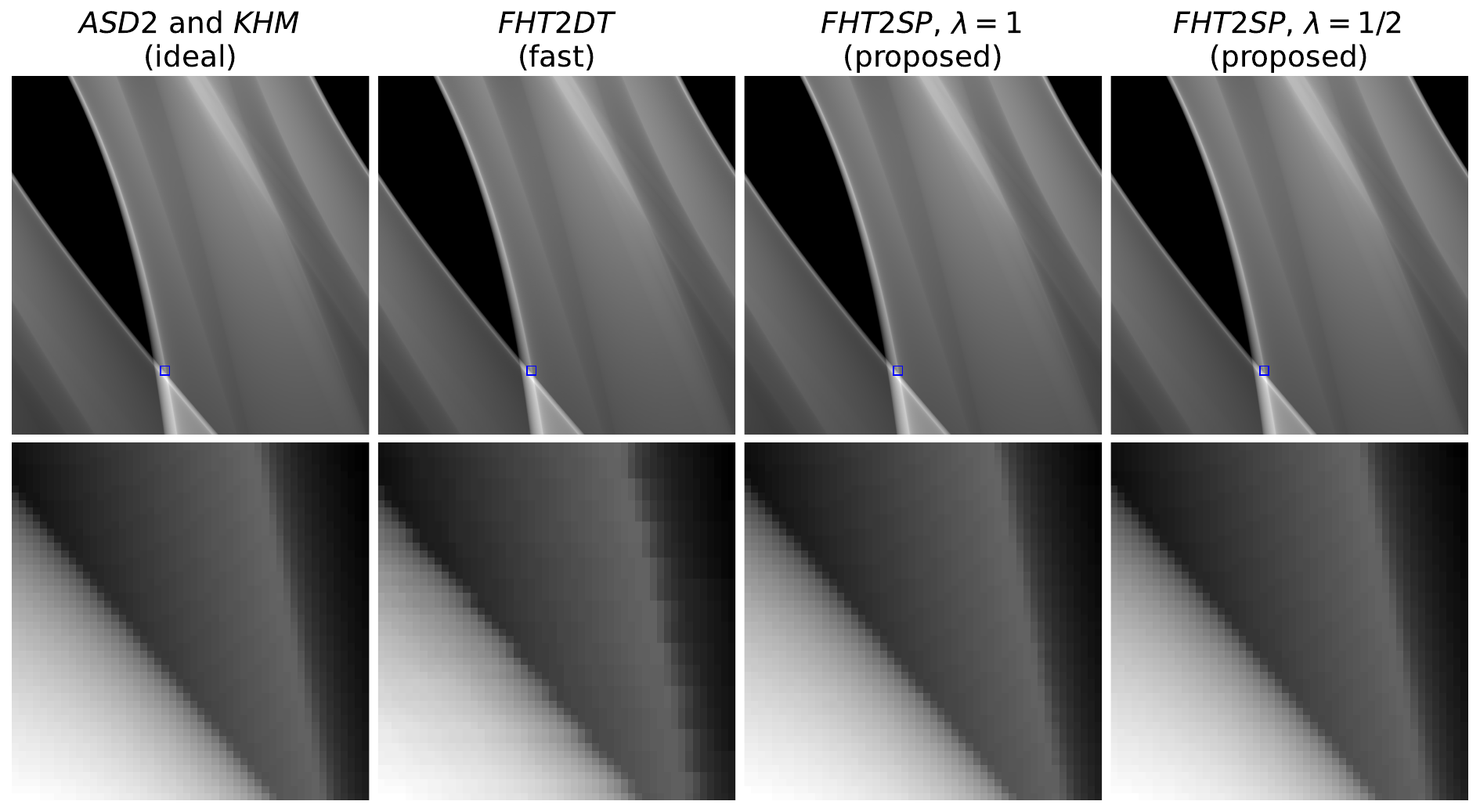} \\
(b) }
\end{minipage}

\caption{An example of the HT output images: (a) original image (Shepp–Logan phantom); (b) the resulting Hough images computed using the compared HT algorithms, with zoomed-in and contrast-enhanced regions indicated by blue rectangles. 
\label{fig:Shepp_Logan_sinograms}}
\end{figure}

A limitation of the $FHT2SP$ algorithm is its high memory consumption, which increases with the superpixel size $\widehat{x}$ and depends on the accuracy-related meta-parameter $\lambda$. 
For instance, achieving an orthotropic error of $5/8$ ($\lambda = 1/8$) on an $1024 \times 1024$ input image requires a $29 \times 29$ superpixel, leading to processing an effective image of size $29696 \times 29696$ via the $FHT2DT$ algorithm. 
A promising way to significantly reduce memory requirements is to incorporate an in-place variant of $FHT2DT$ within the $FHT2SP$ algorithm, highlighting the importance of developing such an in-place version.

\section{Conclusion}
\label{sec:conclusion}

This work puts forward a generalization of Brady’s superpixel concept, initially limited to images with power-of-two dimensions. 
Our extension allows for non-square superpixels with arbitrary width, height, and flexible positioning of the nonzero significant column.
%Although Brady introduced the superpixel concept, the gap in assessing impact of superpixel parameters on HT algorithm performance we conducted.
We thoroughly assessed the impact of both the original superpixels introduced by Brady and their generalized form on HT performance.
Incorporating the generalized superpixel formulation into the $FHT2DT$ FHT algorithm led to the development of a novel $FHT2SP$ HT algorithm.
The $FHT2SP$ algorithm is applicable to arbitrary-shaped images and achieves near-optimal $\mathcal{O}(wh \ln^3 w)$ computational complexity while offering improved accuracy comparable to that of existing accurate, yet asymptotically slower, HT algorithms. 
%The $FHT2SP$ algorithm's patterns approximate continuous lines with a maximum orthotropic error bounded by $\lambda + 1/2$, with $\lambda \in (0,1]$ being a tunable meta-parameter. 
%Theoretical and empirical analyses were conducted on the influence of superpixel parameters and meta-parameter $\lambda$ on the performance characteristics of the $FHT2SP$ algorithm.
%The impact of superpixel parameters and meta-parameter $\lambda$ on $FHT2SP$'s performance was analyzed theoretically and empirically.
%Both the near-optimal complexity and accuracy estimations are established theoretically and validated experimentally.
A Python implementation of the proposed $FHT2SP$ algorithm is open-source and available as a part of the Python library \emph{adrt}~\cite{adrt}.
%, which also contains implementations of $FHT2DT$, $ASD2$, and $KHM$ used for comparison in the paper. 
%The developed $FHT2SP$ algorithm offers significant advantages in practical scenarios that require both fast and accurate HT computation for large industrial images. 
%Specifically, $FHT2SP$ (with $\lambda=1$) demonstrates lower computational complexity than the $KHM$ algorithm for images with linear dimensions exceeding $3000$, and outperforms the $ASD2$ algorithm from linear sizes around $25000$, while achieving nearly the same level of accuracy. 
%In real-world applications, their differences in accuracy are typically negligible.
%The $FHT2SP$ algorithm may be particularly valuable in applications requiring a trade-off between computational efficiency and high accuracy in HT-based processing of large images, including those with non-power-of-two dimensions.
$FHT2SP$ provides clear benefits for fast and accurate HT on large industrial images, specifically, when $\lambda=1$, being faster than $KHM$ beyond 3000 pixels and than $ASD2$ beyond around 25000 pixels, with accuracy close enough to make the difference negligible in real use.

\bibliographystyle{spiebib}
%\bibliography{bibliography.tex}
%\input{bibliography.tex}
\bibliography{main}

\begin{thebibliography}{10}

\bibitem{hough1959machine}
P.~V. Hough, ``Machine analysis of bubble chamber pictures,'' in International
  Conference on High Energy Accelerators and Instrumentation, CERN,
  1959{\nolinebreak\hspace{0.1em}},   554--556 (1959).

\bibitem{rahmdel2015review}
P.~S. Rahmdel, R. Comley, D. Shi, and S. McElduff, ``A review of hough
  transform and line segment detection approaches.,'' {VISAPP (1)} ,  411--418
  (2015).

\bibitem{aliev2014postroenie}
M.~A. Aliev, D.~P. Nikolaev, and A.~A. Saraev, ``Postroenie bystrykh
  vychislitelnykh skhem nastroyki algoritma binarizatsii nibleka,'' {Trudy ISA
  RAN (Proceedings of ISA RAS)}~{\bf 64}(3),  25--34 (2014).

\bibitem{saha2010hough}
S. Saha, S. Basu, M. Nasipuri, and D.~K. Basu, ``A hough transform based
  technique for text segmentation,'' {arXiv preprint arXiv:1002.4048}  (2010).

\bibitem{bezmaternykh2024text}
P.~V. Bezmaternykh, ``Text image normalization using fast hough transform,''
  {ITiVS} (4),  3--16 (2024).
\newblock DOI: 10.14357/20718632240401.

\bibitem{polevoy2023tomographic}
D. Polevoy, M. Gilmanov, D. Kazimirov, M. Chukalina, A. Ingacheva, P. Kulagin,
  and D. Nikolaev, ``Tomographic reconstruction: General approach to fast
  back-projection algorithms,'' {Mathematics}~{\bf 11}(23),  4759 (2023).

\bibitem{sheshkus2020vanishing}
A. Sheshkus, A. Chirvonaya, D. Matveev, D. Nikolaev, and V.~L. Arlazarov,
  ``Vanishing point detection with direct and transposed fast hough transform
  inside the neural network,'' {Computer Optics}~{\bf 44}(5),  737--745 (2020).
\newblock DOI: 10.18287/2412-6179-CO-676.

\bibitem{ershov2015fast}
E. Ershov, A. Terekhin, D. Nikolaev, V. Postnikov, and S. Karpenko, ``Fast
  hough transform analysis: pattern deviation from line segment,'' in Eighth
  International Conference on Machine Vision (ICMV
  2015){\nolinebreak\hspace{0.1em}},   {\bf 9875},  42--46, SPIE (2015).

\bibitem{aliev2019use}
M. Aliev, E.~I. Ershov, and D.~P. Nikolaev, ``On the use of fht, its
  modification for practical applications and the structure of hough image,''
  in Eleventh International Conference on Machine Vision (ICMV
  2018){\nolinebreak\hspace{0.1em}},   {\bf 11041},  320--328, SPIE (2019).

\bibitem{brady1992fast}
M.~L. Brady and W. Yong, ``Fast parallel discrete approximation algorithms for
  the radon transform,'' in Proceedings of the fourth annual ACM symposium on
  Parallel algorithms and architectures{\nolinebreak\hspace{0.1em}},   91--99
  (1992).

\bibitem{kazimirov2025generalizing}
D.~D. Kazimirov, E.~O. Rybakova, V.~V. Gulevskiy, A.~P. Terekhin, E.~E.
  Limonova, and D.~P. Nikolaev, ``Generalizing the brady-yong algorithm:
  Efficient fast hough transform for arbitrary image sizes,'' {IEEE
  Access}~{\bf 13},  20101--20132 (2025).
\newblock DOI: 10.1109/ACCESS.2025.3534405.

\bibitem{khanipov2018computational}
T.~M. Khanipov, ``Computational complexity lower bounds of certain discrete
  radon transform approximations,'' {arXiv preprint arXiv:1801.01054}  (2018).

\bibitem{karpenko2021analysis}
S.~M. Karpenko and E.~I. Ershov, ``Analysis of properties of dyadic patterns
  for the fast hough transform,'' {Problems of Information Transmission}~{\bf
  57}(3),  292--300 (2021).

\bibitem{smirnov2023analyzing}
G. Smirnov and S. Karpenko, ``Analyzing deviations of dyadic lines in fast
  hough transform,'' {arXiv preprint arXiv:2311.10064}  (2023).

\bibitem{math11153336}
D. Nikolaev, E. Ershov, A. Kroshnin, E. Limonova, A. Mukovozov, and I.
  Faradzhev, ``On a fast hough/radon transform as a compact summation scheme
  over digital straight line segments,'' {Mathematics}~{\bf 11}(15) (2023).

\bibitem{khanipov2018ensemble}
T.~M. Khanipov, ``Ensemble computation approach to the hough transform,''
  {arXiv preprint arXiv:1802.06619}  (2018).

\bibitem{app112210606}
{\'O}. G{\'o}mez-C{\'a}rdenes, J.~G. Marichal-Hern{\'a}ndez, J.
  Phillip~L{\"u}ke, and J.~M. Rodr{\'\i}guez-Ramos, ``Central and periodic
  multi-scale discrete radon transforms,'' {Applied Sciences}~{\bf 11}(22)
  (2021).

\bibitem{rosenfeld2006digital}
A. Rosenfeld, ``Digital straight line segments,'' {IEEE Transactions on
  computers}~{\bf 100}(12),  1264--1269 (2006).

\bibitem{brady1998fast}
M.~L. Brady, ``A fast discrete approximation algorithm for the radon
  transform,'' {SIAM Journal on Computing}~{\bf 27}(1),  107--119 (1998).

\bibitem{kazimirov2025generalization}
D. Kazimirov, D. Nikolaev, E. Rybakova, and A. Terekhin, ``Generalization of
  brady-yong algorithm for fast hough transform to arbitrary image size,'' in
  Fifth Symposium on Pattern Recognition and Applications (SPRA
  2024){\nolinebreak\hspace{0.1em}},   {\bf 13540},  67--72, SPIE (2025).

\bibitem{loczi2020explicit}
L. L{\'o}czi, ``Explicit and recursive estimates of the lambert w function,''
  {arXiv preprint arXiv:2008.06122}  (2020).

\bibitem{shepp1974fourier}
L.~A. Shepp and B.~F. Logan, ``The fourier reconstruction of a head section,''
  {IEEE Transactions on nuclear science}~{\bf 21}(3),  21--43 (1974).

\bibitem{adrt}
 {IITP Vision Lab}, ``{adrt: Approximate Discrete Radon Transform}.''
  \url{https://github.com/iitpvisionlab/adrt}.
\newblock GitHub repository, accessed: 2025-13-07.

\end{thebibliography}

\end{document}